\newtheorem{prop}{Proposition}
\DeclareMathOperator*{\argmax}{arg\,max}
\title{Optimizing Differentiable Relaxations of Coreference Evaluation Metrics}
\author{
 Phong Le$^1$ \hspace{1cm}  Ivan Titov$^{1,2}$    \\
 $^1$ILLC,  University of Amsterdam  \\
   $^2$ILCC, School of Informatics, University of Edinburgh \\
    {\tt p.le@uva.nl} \hspace{1cm} {\tt ititov@inf.ed.ac.uk}
 }
\begin{document}

\maketitle

\begin{abstract}
Coreference evaluation metrics are hard to optimize directly as they are non-differentiable
functions, not easily decomposable into elementary decisions. Consequently,
most approaches optimize objectives only indirectly related to the end goal, resulting
in suboptimal performance.
Instead, we propose a differentiable relaxation that lends itself to gradient-based optimisation, 
thus bypassing the need for reinforcement learning or heuristic modification of cross-entropy.
We show that by modifying the training objective of a competitive neural coreference
system, we obtain a substantial gain in performance. 
This suggests
that our approach can be regarded as a viable alternative to using reinforcement learning or more computationally expensive imitation learning. 
    
\end{abstract}

\section{Introduction}


Coreference resolution is the task of identifying all mentions 
which refer to the same entity in a document. 
It has been shown beneficial in many natural language processing (NLP) applications, including question answering~\cite{nips15_hermann}
and information extraction~\cite{W97-0319}, and often regarded as a prerequisite to any text understanding task.

Coreference resolution can be regarded as a clustering problem:
each cluster corresponds to a single entity and consists of all its  mentions in a given text.
Consequently, it is natural to evaluate predicted clusters by comparing them with the ones annotated by human experts,
and this 
is exactly what the standard metrics (e.g., MUC, B$^3$, CEAF) do. In contrast, most state-of-the-art systems
are optimized to make individual co-reference decisions, and such losses are only indirectly related to the metrics.

One way to deal with this challenge is to optimize directly the non-differentiable 
metrics using reinforcement learning (RL), for example,  relying on the REINFORCE policy gradient algorithm~\cite{williams1992simple}.
However, this approach has not been very successful, which, as suggested by \newcite{clark-manning:2016:EMNLP2016}, is possibly due to the discrepancy between sampling decisions at training time and choosing the highest ranking ones at test time.  A more successful alternative is using a `roll-out' stage to associate cost with possible decisions, as in \newcite{clark-manning:2016:EMNLP2016}, but it is computationally expensive. Imitation learning \cite{D14-1225,P15-1136}, though also exploiting metrics, requires access to an expert policy, with exact policies not directly computable for the metrics of interest.



In this work, we aim at combining the best of both worlds by proposing 
a simple method that can turn popular coreference evaluation metrics into
differentiable functions of model parameters. As we show, this function
can be computed recursively using scores of individual local decisions, resulting
in a simple and efficient estimation procedure. The key idea is to replace
non-differentiable indicator functions (e.g. the member function $\mathbb{I}(m \in S)$) with
the corresponding posterior probabilities ($p(m \in S)$) computed by the model. Consequently,
non-differentiable functions used within the metrics
(e.g. the set size function $|S| = \sum_m \mathbb{I}(m \in S)$)
become differentiable   ($|S|_c = \sum_m p(m \in S)$). Though we assume that the scores of the underlying
statistical model can be used to define a probability model, we show that this is not a serious limitation. Specifically,
as a baseline we use a probabilistic version of the neural mention-ranking model of~\newcite{P15-1137}, which
on its own outperforms the original one and achieves similar performance to its global version~\cite{N16-1114}. 
Importantly when we use the introduced differentiable relaxations in training, we observe a substantial gain
in performance over our probabilistic baseline. Interestingly, the absolute improvement (+0.52)
is higher than the one reported in \newcite{clark-manning:2016:EMNLP2016} using RL (+0.05) and 
the one using reward rescaling\footnote{Reward rescaling is a technique that computes error values for a heuristic loss function
based on the reward difference between the best decision according to the current model 
and the decision leading to the highest metric score.}
 (+0.37). 
This suggests that 
our method provides a viable alternative to using RL and reward rescaling.

The outline of our paper is as follows: we introduce our neural resolver baseline and
the B$^3$ and LEA metrics in Section~\ref{sec background}. Our method to turn 
a mention ranking resolver into an entity-centric resolver is presented in 
Section~\ref{sec ranking to entity}, and the proposed differentiable relaxations in 
Section~\ref{sec differentiable metrics}. 
Section~\ref{sec experiments} shows our experimental results.

\section{Background}
\label{sec background}
\subsection{Neural mention ranking}
\label{subsec neural mr}

In this section we introduce neural mention ranking, the framework which underpins current  state-of-the-art models~\cite{clark-manning:2016:EMNLP2016}.
%
Specifically, we consider a probabilistic version of the method proposed by \newcite{P15-1137}. 
In experiments we will use it as our baseline.

Let $(m_1, m_2,.., m_n)$ be the list of mentions in a document.  
For each mention $m_i$, let $a_i \in \{1, ..., i\}$ be the index of 
the mention that $m_i$ is coreferent with (if $a_i=i$, $m_i$ is the 
first mention of some entity appearing in the document). As standard
in coreference resolution literature, we will refer to $m_{a_i}$ as an antecedent
of $m_i$.\footnote{This slightly deviates from the definition of antecedents in linguistics~\cite{Crystal:97}.}  
Then, in mention ranking the goal is to score 
antecedents of a mention higher than any other 
mentions, i.e., if $\mathsf{s}$ is the scoring function, we require $\mathsf{s}(a_i=j) > \mathsf{s}(a_i=k)$ for all $j,k$ 
such that $m_i$ and $m_j$ are coreferent but $m_i$ and $m_k$ are not.

Let $\phi_a(m_i) \in \mathbb{R}^{d_a}$ and $\phi_p(m_i, m_j) \in \mathbb{R}^{d_p}$ 
be respectively features of $m_i$ and features of pair $(m_i,m_j)$.
The scoring function is defined by:
\begin{equation*}
    \mathsf{s}(a_i=j) = 
    \begin{cases}
        \mathbf{u}^T \begin{bmatrix}
        \mathbf{h}_a(m_i) \\ \mathbf{h}_p(m_i, m_j) 
        \end{bmatrix} + u_0 & \text{if } j < i \\
        \mathbf{v}^T \mathbf{h}_a(m_i) + v_0 & \text{if } j = i
    \end{cases}
\end{equation*}
where 
\begin{align*}
    \mathbf{h}_a(m_i) &= \tanh(\mathbf{W}_a \phi_a(m_i) + \mathbf{b}_a) \\
    \mathbf{h}_p(m_i, m_j) &= \tanh(\mathbf{W}_p \phi_p(m_i,m_j) + \mathbf{b}_p)
\end{align*}
and $\mathbf{u}, \mathbf{v}, \mathbf{W}_{a}, \mathbf{W}_{p}, \mathbf{b}_{a}, \mathbf{b}_{p}$ 
are real vectors and matrices with proper dimensions, $u_0, v_0$ are real scalars.

Unlike 
\newcite{P15-1137}, where  
the max-margin loss is used, we define a probabilistic model.
The probability\footnote{For the sake of readability, we do not explicitly mark
in our notation that all the probabilities are conditioned on the document (e.g., the mentions) and dependent on model parameters.}
that $m_i$ and $m_j$ are coreferent is given by
\begin{equation}
    p(a_i=j) = \frac{\exp\{\mathsf{s}(a_i=j)\}}{\sum_{j'=1}^{i}\exp\{\mathsf{s}(a_i=j')\}}
    \label{equ p_link}
\end{equation}
Following \newcite{D13-1203} we use the following softmax-margin \cite{gimpel-smith:2010:NAACLHLT} loss function:
\begin{equation*}
    L(\Theta) = - \sum_{i=1}^n \log \big(\sum_{j \in C(m_i)} p'(a_i=j) \big) + \lambda ||\Theta||_1,
\end{equation*}
where  $\Theta$ are model parameters, $C(m_i)$ is the set of the indices of 
correct antecedents of $m_i$, and $p'(a_i=j) \propto p(a_i=j)e^{\Delta(j,C(m_i))}$.
$\Delta$ is a cost function used to manipulate the
contribution of different error types to the loss function: 
\begin{equation*}
    \Delta(j,C(m_i)) = 
    \begin{cases}
        \alpha_1 & \text{if } j \ne i \wedge i \in C(m_i)\\
        \alpha_2 & \text{if } j = i \wedge i \notin C(m_i) \\
        \alpha_3 & \text{if } j \ne i \wedge j \notin C(m_i) \\
        0 & \text{otherwise}
    \end{cases}
\end{equation*}
The error types are ``false anaphor'', ``false new'', ``wrong link'', and ``no mistake'', respectively.
In our experiments, we borrow their values from \newcite{D13-1203}:
$(\alpha_1, \alpha_2, \alpha_3)=(0.1, 3, 1)$.
In the subsequent discussion, we refer to the loss as 
{\it mention-ranking heuristic cross entropy}.

\subsection{Evaluation Metrics}
\label{sub metric}
We use five most popular metrics\footnote{All are implemented in \newcite{pradhan-EtAl:2014:P14-2}, \url{https://github.com/conll/reference-coreference-scorers}.}, 
\begin{itemize}
\item MUC \cite{M95-1005}, 
\item B$^3$ \cite{bagga1998algorithms},
\item CEAF$_{m}$, CEAF$_{e}$ \cite{H05-1004}, 
\item BLANC \cite{P14-2005}, 
\item LEA \cite{moosavi-strube:2016:P16-1}. 
\end{itemize}
for evaluation. However, because MUC is the least discriminative metric
\cite{moosavi-strube:2016:P16-1},
whereas CEAF is slow to compute, out of the five most popular 
metrics we incorporate into our loss only B$^3$. In addition, we  
integrate LEA, as it  has been shown to provide a good balance between discriminativity 
and interpretability.

Let $G=\{G_1, G_2, ..., G_N\}$ and $S=\{S_1, S_2, ..., S_M\}$ be 
the gold-standard entity set and an entity set given by a resolver. 
Recall that an entity is a set of mentions.
The recall and precision of the B$^3$ metric is computed by:
\begin{align*}
    R_{B^3} &= \frac{\sum_{v=1}^N \sum_{u=1}^M \frac{|G_v \cap S_u|^2}{|G_v|} }{\sum_{v=1}^N |G_v|} \\
    P_{B^3} &= \frac{\sum_{u=1}^M \sum_{v=1}^N \frac{|G_v \cap S_u|^2}{|S_u|} }{\sum_{u=1}^M |S_u|}
\end{align*}
The LEA metric is computed as:
\begin{align*}
    R_{LEA} &= \frac{\sum_{v=1}^N \big(|G_v| \times \sum_{u=1}^M \frac{link(G_v \cap S_u)}{link(G_v)}\big)}{\sum_{v=1}^N |G_v|} \\
    P_{LEA} &= \frac{\sum_{u=1}^M \big(|S_u| \times \sum_{v=1}^N \frac{link(G_v \cap S_u)}{link(S_u)}\big)}{\sum_{u=1}^M |S_u|} \\
\end{align*}
where $link(E) = |E| \times (|E| -1 ) / 2$ is the number of coreference links 
in entity $E$. $F_{\beta}$, for both metrics, is defined by:
\begin{equation*}
    F_{\beta} = (1 + \beta^2) \frac{P \times R}{\beta^2 P + R}
\end{equation*}
$\beta = 1$ is used in the standard evaluation.

\section{From mention ranking to entity centricity}
\label{sec ranking to entity}

Mention-ranking resolvers do not explicitly provide information about entities/clusters 
which is required by B$^3$ and LEA. We therefore propose a simple solution 
that can turn a mention-ranking resolver into an entity-centric one. 

\begin{figure}
    \centering
    \includegraphics[width=0.47\textwidth]{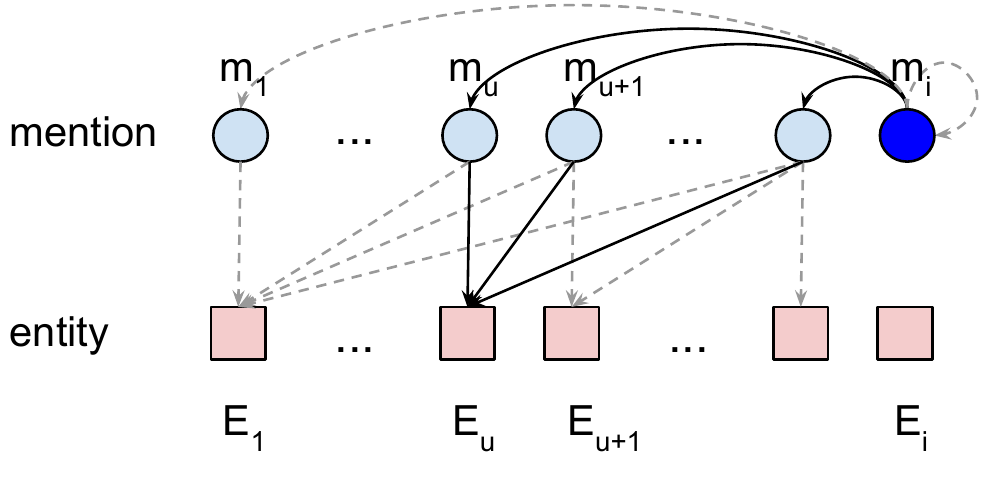}
    \caption{For each mention $m_u$ there is a potential entity $E_u$ so that $m_u$ 
    is the first mention in the chain. Computing $p(m_i \in E_{u}), u < i$  takes into the account
    all directed paths from $m_i$ to $E_u$ (black arrows). 
    Noting that there is no directed path from any $m_{k}, k < u$ to $E_u$
    because $p(m_{k} \in E_u) = 0$. (See text for more details.)}
    \label{fig mention-entity}
\end{figure}

First note that in a document containing $n$ mentions, there are $n$ potential 
entities $E_1, E_2,..., E_n$ where $E_i$ has $m_i$ as the first mention. 
Let $p(m_i \in E_u)$ be the probability that mention $m_i$ corresponds to entity $E_u$. 
We now show that it can be computed recursively based on 
$p(a_i = j)$ 
as follows:
\begin{align*}
    &p(m_i \in E_u) = \\
    &\begin{cases}
    \sum_{j=u}^{i - 1} p(a_i = j) \times p(m_j \in E_u)       & \text{if } u < i \\
    p(a_i = i)  & \text{if } u = i \\
    0 & \text{if } u > i \\
  \end{cases}
\end{align*}
In other words, if $u < i$, we consider all possible $m_j$ with which $m_i$ can be 
coreferent, and which can correspond to entity $E_u$. If $u = i$,  
the link to be considered is the $m_i$'s self-link. And, if $u > i$, the probability is zero,
as it is impossible for $m_i$ to be assigned to an entity introduced only later. 
See Figure~\ref{fig mention-entity} for extra information.

We now turn to two crucial questions about this formula: 
\begin{itemize}
\it
    \item Is $p(m_i \in \bullet)$ a valid probability distribution?
    \item Is it possible for a mention $m_u$ to be mostly anaphoric (i.e. $p(m_u \in E_u)$ is low) but for the corresponding cluster $E_u$ to be highly probable (i.e. $p(m_i \in E_u)$ is high for some $i$)?
\end{itemize}
The first question is answered in Proposition~\ref{prop validity}. 
The second question is important because, intuitively, 
when a mention $m_u$ is anaphoric, the potential entity $E_u$ 
does not exist. We will show that the answer is ``No'' by 
proving in Proposition~\ref{prop consistency} that 
the probability that $m_u$ is anaphoric is always higher than 
any probability that $m_i$, $i > u$ refers to $E_u$.

\begin{prop} $p(m_i \in \bullet)$ is a valid probability distribution, i.e., 
$\sum_{u=1}^n p(m_i \in E_u) = 1$, for all $i=1,...,n$.
\label{prop validity}
\end{prop}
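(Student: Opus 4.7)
The plan is to prove this by induction on $i$, exploiting the recursive structure of $p(m_i \in E_u)$ together with the fact that $p(a_i = \cdot)$ is already a valid distribution over $\{1,\ldots,i\}$ by Equation~(\ref{equ p_link}).

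For the base case $i=1$, there is only one mention, so $p(a_1 = 1) = 1$, giving $p(m_1 \in E_1) = 1$ and $p(m_1 \in E_u) = 0$ for $u > 1$; the sum is trivially~$1$. For the inductive step, I assume $\sum_{u=1}^n p(m_j \in E_u) = 1$ for all $j < i$ and split the target sum as
\begin{equation*}
\sum_{u=1}^n p(m_i \in E_u) = \sum_{u=1}^{i-1} p(m_i \in E_u) + p(m_i \in E_i) + \sum_{u=i+1}^n p(m_i \in E_u).
\end{equation*}
The last term vanishes by the third case of the recursion, and the middle term equals $p(a_i = i)$ by the second case.

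The main computation is to rewrite the first sum by expanding the recursion and swapping the order of summation. Since $u$ ranges over $1,\ldots,i-1$ and, for each $u$, $j$ ranges over $u,\ldots,i-1$, exchanging the two yields
\begin{equation*}
\sum_{u=1}^{i-1} \sum_{j=u}^{i-1} p(a_i=j)\, p(m_j \in E_u) = \sum_{j=1}^{i-1} p(a_i=j) \sum_{u=1}^{j} p(m_j \in E_u).
\end{equation*}
The inner sum equals $1$ by the inductive hypothesis, using that $p(m_j \in E_u) = 0$ for $u > j$ so the sum up to $j$ agrees with the sum up to $n$. Thus $\sum_{u=1}^{i-1} p(m_i \in E_u) = \sum_{j=1}^{i-1} p(a_i = j)$, and adding $p(a_i = i)$ gives $\sum_{j=1}^{i} p(a_i = j) = 1$ by normalization of the antecedent distribution.

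There is no real obstacle here; the only thing to be careful about is keeping the summation bounds consistent when swapping orders, and remembering to invoke the inductive hypothesis on the \emph{full} distribution $\sum_{u=1}^n p(m_j \in E_u)$ rather than the truncated sum that appears naturally after the swap — the two coincide precisely because of the third case of the recursion.
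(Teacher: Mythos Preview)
Your proof is correct and follows essentially the same inductive argument as the paper: expand the recursion, swap the order of the double sum, and use the inductive hypothesis together with the normalization of $p(a_i=\cdot)$. The only cosmetic difference is that the paper zero-pads the inner $j$-sum before swapping (using $p(m_j\in E_u)=0$ for $j<u$) to obtain $\sum_{u=1}^{i-1} p(m_j\in E_u)$, whereas you swap first and then zero-pad the resulting inner $u$-sum (using $p(m_j\in E_u)=0$ for $u>j$); both routes are equivalent.
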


\begin{proof}
We prove this proposition by induction. 

Basis: it is obvious that $\sum_{u=1}^n p(m_1 \in E_u) = p(a_1 = 1) = 1$.

Assume that $\sum_{u=1}^n p(m_j \in E_u) = 1$ for all $j < i$. Then,
\begin{align*}
&\sum_{u=1}^{i-1} p(m_i \in E_u) \\
&= \sum_{u=1}^{i-1} \sum_{j=u}^{i-1} p(a_i = j) \times p(m_j \in E_u)
\end{align*}
Because $p(m_j \in E_u) = 0$ for all $j < u$, this expression is equal to
\begin{align*}
&\sum_{u=1}^{i-1} \sum_{j=1}^{i-1} p(a_i = j) \times p(m_j \in E_u) \\
&=\sum_{j=1}^{i-1} p(a_i = j) \times \sum_{u=1}^{i-1} p(m_j \in E_u) \\
&= \sum_{j=1}^{i-1} p(a_i = j)
\end{align*}
    
Therefore, 
\begin{equation*}
    \sum_{u=1}^n p(m_i \in E_u) = \sum_{j=1}^{i-1} p(a_i = j) + p(a_i = i) = 1
\end{equation*} 
(according to Equation~\ref{equ p_link}).

\end{proof}

\begin{prop}
$p(m_i \in E_u) \leq p(m_u \in E_u)$ for all $i > u$. 
\label{prop consistency}
\end{prop}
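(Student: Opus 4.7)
The plan is to prove the statement by induction on $i$ with $u$ held fixed, leveraging the recursive definition of $p(m_i \in E_u)$ together with the fact that the antecedent probabilities $p(a_i = j)$ form a probability distribution (so any partial sum is at most $1$).

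For the base case I would take $i = u+1$. Here the recursion collapses to a single term, giving $p(m_{u+1} \in E_u) = p(a_{u+1} = u)\, p(m_u \in E_u)$, which is bounded by $p(m_u \in E_u)$ since $p(a_{u+1} = u) \leq 1$. For the inductive step, assume the inequality $p(m_j \in E_u) \leq p(m_u \in E_u)$ holds for every $j$ with $u < j < i$ (and note it is an equality when $j = u$). Applying the recursion,
\begin{align*}
p(m_i \in E_u) &= \sum_{j=u}^{i-1} p(a_i = j)\, p(m_j \in E_u) \\
&\leq p(m_u \in E_u) \sum_{j=u}^{i-1} p(a_i = j) \\
&\leq p(m_u \in E_u),
\end{align*}
where the last step uses that $\sum_{j=u}^{i-1} p(a_i = j)$ is a partial sum of the distribution $p(a_i = \cdot)$ over $\{1, \dots, i\}$ and is therefore at most $1$.

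I do not expect any real obstacle: the argument is a direct structural induction that mirrors the derivation used in Proposition~\ref{prop validity}. The only point worth highlighting is the interpretation of the final inequality: the probability ``mass lost'' between $p(m_u \in E_u)$ and $p(m_i \in E_u)$ corresponds precisely to the antecedent choices $p(a_k = k')$ made at intermediate mentions $m_k$ (for $u < k \leq i$) that either start a brand new entity ($k' = k$) or link back to a mention outside the chain leading to $E_u$. This also provides the intuitive justification requested in the surrounding text: a mention $m_u$ that is likely anaphoric (small $p(a_u = u) = p(m_u \in E_u)$) caps the probability that any later mention ends up in the putative cluster $E_u$.
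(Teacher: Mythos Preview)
Your proof is correct and matches the paper's own argument essentially line for line: both proceed by induction on $i$ with base case $i=u+1$, then bound each term in the recursive sum by $p(m_u\in E_u)$ and use that the antecedent probabilities sum to at most~$1$. The only cosmetic difference is that the paper extends the partial sum $\sum_{j=u}^{i-1} p(a_i=j)$ to the full sum $\sum_{j=1}^{i} p(a_i=j)=1$, whereas you directly invoke that the partial sum is $\leq 1$; this is the same step.
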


\begin{proof}
We prove this proposition by induction. 

Basis: for $i = u + 1$,
\begin{align*}
    p(m_{u+1} \in E_u) &= p(a_{u+1} = u) \times p(m_u \in E_u) \\
    & \le p(m_u \in E_u)
\end{align*}

Assume that $p(m_j \in E_u) \leq p(m_u \in E_u)$ for all $j \ge u$ and $j < i$. Then
\begin{align*}
p(m_i \in E_u) &= \sum_{j=u}^{i-1} p(a_i = j) \times p(m_j \in E_u) \\
&\le \sum_{j=u}^{i-1} p(a_i=j) \times p(m_u \in E_u) \\
&\le p(m_u \in E_u) \times \sum_{j=1}^i p(a_i = j) \\
&= p(m_u \in E_u)
\end{align*}
\end{proof}

\subsection{Entity-centric heuristic cross entropy loss}
\label{sub vanilla loss}
Having $p(m_i \in E_u)$ computed, we can consider coreference resolution as a multiclass 
prediction problem. An entity-centric heuristic cross entropy loss is thus given below:
\begin{equation*}
    L_{ec}(\Theta) = - \sum_{i=1}^n \log p'(m_i \in E_{e(m_i)}) + \lambda ||\Theta||_1
\end{equation*}
where $E_{e(m_i)}$ is the correct entity that $m_i$ belongs to, 
$p'(m_i \in E_u) \propto p(m_i \in E_u) e^{\Gamma(u,e(m_i))}$. 
Similar to $\Delta$ in the mention-ranking heuristic loss in Section~\ref{subsec neural mr}, 
$\Gamma$ is a cost function used to manipulate the contribution of
the four different error types (``false anaphor'', ``false new'', 
``wrong link'', and ``no mistake''): 
\begin{align*}
    &\Gamma(u,e(m_i)) = \\
    &\begin{cases}
        \gamma_1 & \text{if } u \ne i \wedge e(m_i) = i \\
        \gamma_2 & \text{if } u = i \wedge e(m_i) \ne i \\
        \gamma_3 & \text{if } u \ne e(m_i) \wedge u \ne i \wedge e(m_i) \ne i \\
        0 & \text{otherwise}
    \end{cases}
\end{align*}

\section{From non-differentiable metrics to differentiable losses}
\label{sec differentiable metrics}

\begin{figure}
    \centering
    \includegraphics[width=0.5\textwidth]{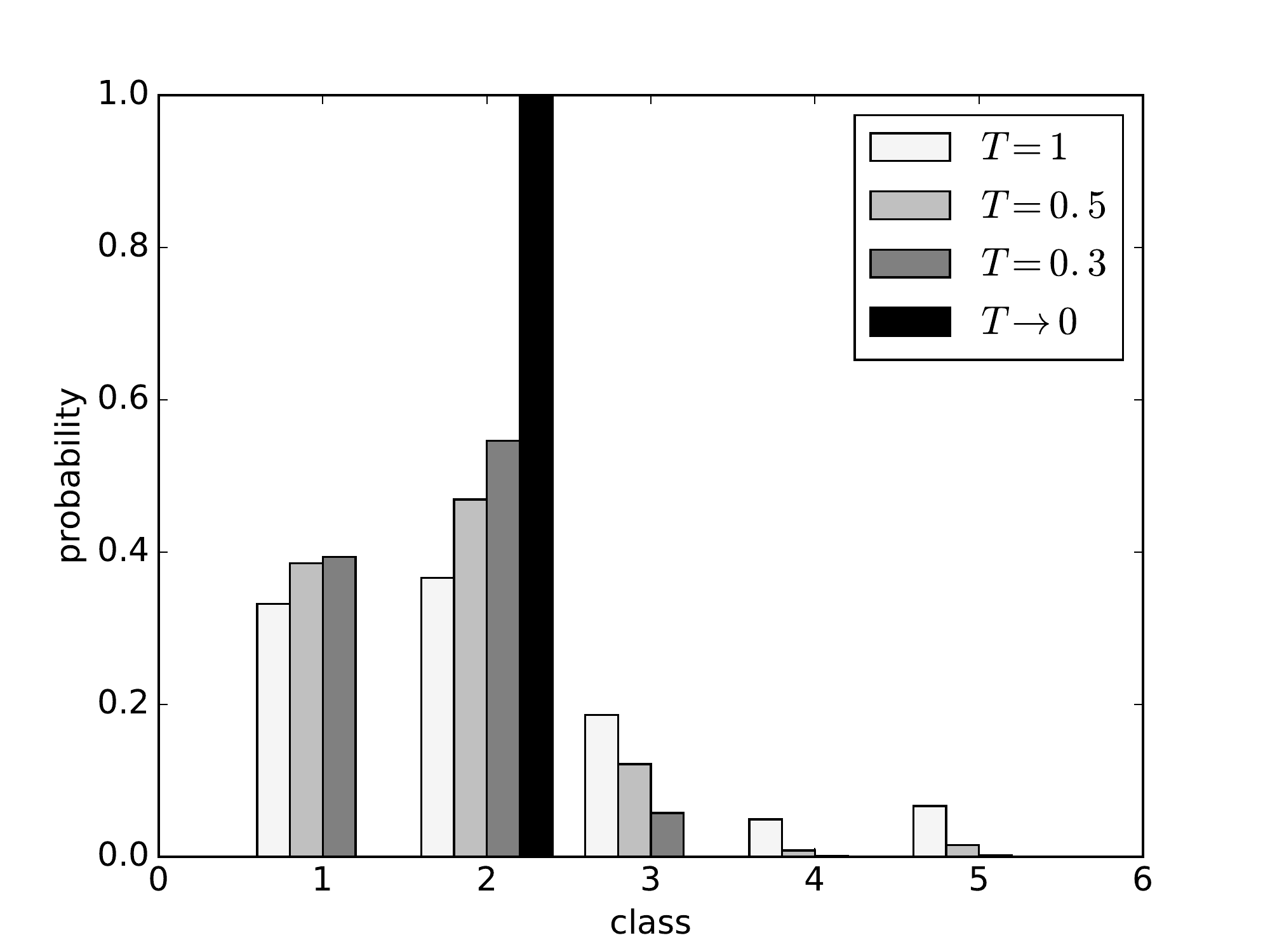}
    \caption{Softmax $\frac{\exp\{\pi_i / T\}}{\sum_j \exp\{\pi_j / T\}}$ 
    with different values of $T$. The softmax becomes more peaky when the value of $T$ 
    gets smaller. As $T \rightarrow 0$ the softmax converges to the indicator function
    that chooses $\argmax_i \pi_i$.}
    \label{fig softmax-T}
\end{figure}

There are two functions used in computing B$^3$ and LEA: 
the set size function $|.|$ and the link function $link(.)$. 
Because both of them are non-differentiable, the two metrics are non-differentiable.
We thus need to make these two functions differentiable. 

There are two remarks. Firstly, both functions can be computed using
the indicator function $\mathbb{I}(m_i \in S_u)$:
\begin{align*}
    |S_u| &= \sum_{i=1}^n \mathbb{I}(m_i \in S_u) \\
    link(S_u) &= \sum_{j < i} \mathbb{I}(m_i \in S_u) \times \mathbb{I}(m_j \in S_u)
\end{align*}
Secondly, given $\pi_{i,u} = \log p(m_i \in S_u)$, 
the indicator function $\mathbb{I}(m_i \in S_{u^*})$, $u^*=\argmax_u p(m_i \in S_u)$ 
is the converging point 
of the following softmax as $T \rightarrow 0$ (see Figure~\ref{fig softmax-T}):
\begin{equation*}
    p(m_i \in S_u; T) = \frac{\exp\{\pi_{i,u} / T\}}{\sum_v \exp\{\pi_{i,v} / T\}}
\end{equation*}
where $T$ is called \textit{temperature} \cite{kirkpatrick1983optimization}.

Therefore, we propose to represent each $S_u$ as a \textit{soft}-cluster:
\begin{equation*}
    S_u = \{ p(m_1 \in E_u; T), ..., p(m_n \in E_u; T) \}
\end{equation*}
where, as defined in Section~\ref{sec ranking to entity},
$E_u$ is the potential entity that has $m_u$ as the first mention. 
Replacing the indicator function $\mathbb{I}(m_i \in S_u)$ by 
the probability distribution $p(m_i \in E_u; T)$, we then have 
a \textit{differentiable} version for the set size function and the link function:
\begin{align*}
    |S_u|_d &= \sum_{i=1}^n p(m_i \in E_u; T) \\
    link_d(S_u) &= \sum_{j < i} p(m_i \in E_u; T) \times p(m_j \in E_u; T)
\end{align*}
$|G_v \cap S_u|_d$ and $link_d(G_v \cap S_u)$ are computed similarly with the  
constraint that only mentions in $G_v$ are taken into account. Plugging these 
functions into precision and recall of B$^3$ and LEA in Section~\ref{sub metric}, 
we obtain differentiable $\hat{F}_{\beta, B^3}$ and  $\hat{F}_{\beta,LEA}$, 
which are then used in two loss functions:
\begin{align*}
    L_{\beta, B^3}(\Theta; T) &= -\hat{F}_{\beta, B^3}(\Theta; T) + \lambda ||\Theta||_1 \\
    L_{\beta, LEA}(\Theta; T) &= -\hat{F}_{\beta, LEA}(\Theta; T) + \lambda ||\Theta||_1
\end{align*}
where $\lambda$ is the hyper-parameter of the $L_1$ regularization terms. 

It is worth noting that, as $T \rightarrow 0$, $\hat{F}_{\beta, B^3} \rightarrow F_{\beta, B^3}$
and $\hat{F}_{\beta, LEA} \rightarrow F_{\beta, LEA}$.\footnote{We can easily prove this 
using the algebraic limit theorem.} 
Therefore, when training 
a model with the proposed losses, we can start at a high temperature (e.g., $T=1$) and 
anneal to a small but non-zero temperature. However, in our experiments 
we fix $T=1$. Annealing is left for future work.

\section{Experiments}
\label{sec experiments}

We now demonstrate how to use the proposed differentiable B$^3$ and 
LEA to train a coreference resolver. The source code and 
trained models are available at \url{https://github.com/lephong/diffmetric_coref}.

\subsection*{Setup}
We run experiments on the 
English portion of CoNLL 2012 data \cite{W12-4501} which consists of 3,492 documents 
in various domains and formats. The split provided in the CoNLL 2012 shared task is used. 
In all our resolvers, we use not the original features of  \newcite{P15-1137} 
but their slight modification described in \newcite{N16-1114} (section 6.1).\footnote{\url{https://github.com/swiseman/nn_coref/}}

\subsection*{Resolvers}
We build following baseline and three resolvers:

\begin{itemize}
\item baseline: the resolver presented 
in Section~\ref{subsec neural mr}. 
We use the identical configuration as in \newcite{N16-1114}: 
$\mathbf{W}_a \in \mathbb{R}^{200 \times d_a}$, 
$\mathbf{W}_p \in \mathbb{R}^{700 \times d_p}$,  
$\lambda=10^{-6}$ 
(where $d_a, d_p$ are respectively the numbers of mention features 
and pair-wise features).
We also employ their pretraining methodology.

\item $L_{ec}$: the resolver using the entity-centric cross entropy loss 
introduced in Section~\ref{sub vanilla loss}. We set
$(\gamma_1, \gamma_2, \gamma_3) = (\alpha_1, \alpha_2, \alpha_3) = (0.1, 3, 1)$. 

\item $L_{\beta,B^3}$ and $L_{\beta,LEA}$: the resolvers using the losses 
proposed in Section~\ref{sec differentiable metrics}. $\beta$ is tuned on 
the development set by 
trying each value in $\{\sqrt{0.8}, 1, \sqrt{1.2}, \sqrt{1.4}, \sqrt{1.6}, \sqrt{1.8}, 
1.5, 2\}$.
\end{itemize}
To train these resolvers we use AdaGrad \cite{duchi2011adaptive} 
to minimize their loss functions with the learning rate tuned on the development set
and with one-document mini-batches. 
Note that we use the baseline as the initialization point 
to train the other three resolvers.


\begin{table*}[t!]
    \centering
    \begin{tabular}{l||c|c|c|c|c|c||c}
        & MUC & B$^3$ & CEAF$_m$ & CEAF$_e$ & BLANC & LEA & CoNLL \\
        \hline \hline
        \newcite{P15-1137} & 72.60 & 60.52 & - & 57.05 & - & - & 63.39 \\
        \newcite{N16-1114} & 73.42 & 61.50 & - & 57.70 & - & - & 64.21 \\
        \bottomrule
        Our proposals & & & & & & & \\
         baseline (heuristic loss) & 73.22 & 61.44 & 65.12 & 57.74 & 62.16 & 57.52 & 64.13 \\
        $L_{ec}$ & 73.2 & 61.75 & 65.77 & 57.8 & \textbf{63.3} & 57.89 & 64.25 \\
        \hdashline
        $L_{\beta=1,B^3}$ & 73.37              & 61.94 & 65.79 & 58.22 & 63.19 & 58.06 & 64.51 \\
        $L_{\beta=\sqrt{1.4},B^3}$ & 73.48 & 61.99 & 65.9 & 58.36 & 63.1 & 58.13 & 64.61 \\ 
        \hdashline
        $L_{\beta=1,LEA}$ & 73.3 & 61.88 & 65.69 & 57.99 & 63.27 & 58.03 & 64.39  \\
        $L_{\beta=\sqrt{1.8},LEA}$ & \textbf{73.53} &  \textbf{62.04} & \textbf{65.95} & \textbf{58.41} & 63.09 & \textbf{58.18} & \textbf{64.66} \\
        \hline \hline
        \newcite{clark-manning:2016:EMNLP2016} & & & & & & & \\
        baseline (heuristic loss) & 74.65 & 63.03 & - & 58.40 & - & - & 65.36 \\
        REINFORCE & 74.48 & 63.09 & - & 58.67 & - & - & 65.41 \\
        Reward Rescaling & 74.56 & 63.40 & - & 59.23 & - & - & 65.73 \\

    \end{tabular}
    \caption{Results (F$_1$) on CoNLL 2012 test set. CoNLL is the 
    average of MUC, B$^3$, and CEAF$_e$.}
    \label{table test}
\end{table*}

\subsection{Results}
We firstly compare our resolvers against \newcite{P15-1137} and \newcite{N16-1114}. 
Results are shown in the first half of Table \ref{table test}. 
Our baseline surpasses \newcite{P15-1137}. 
It is likely due to using features from \newcite{N16-1114}.
Using the entity-centric heuristic cross entropy loss and the relaxations are clearly beneficial: 
$L_{ec}$ is slightly better than our baseline and on par with 
the global model of \newcite{N16-1114}. $L_{\beta=1,B^3}, L_{\beta=1,LEA}$ 
outperform the baseline, the global model of \newcite{N16-1114}, and $L_{ec}$. 
However, the best values of $\beta$ are $\sqrt{1.4}$,
$\sqrt{1.8}$ respectively for $L_{\beta,B^3}$, and $L_{\beta,LEA}$. 
Among these resolvers, $L_{\beta=\sqrt{1.8},LEA}$ achieves the highest F$_1$ scores across 
all the metrics except BLANC.

When comparing to~\newcite{clark-manning:2016:EMNLP2016} 
(the second half of Table \ref{table test}), we can see that
the absolute improvement over the baselines (i.e. `heuristic loss' for them and 
the heuristic cross entropy loss for us)  
is higher than that of reward rescaling but with much shorter training time:
$+0.37$ (7 days\footnote{As reported in \url{https://github.com/clarkkev/deep-coref}}) and $+0.52$ (15 hours) 
on the CoNLL metric for \newcite{clark-manning:2016:EMNLP2016} 
and ours, respectively. 
It is worth noting that our absolute scores are weaker than these of 
\newcite{clark-manning:2016:EMNLP2016}, as they build on top of a  similar but stronger 
mention-ranking baseline, which employs 
deeper neural networks and requires a much larger number of epochs to train 
(300 epochs, including pretraining). For the purpose of illustrating 
the proposed losses, we started with a simpler model by \newcite{P15-1137}
which requires a much smaller number of epochs, thus faster, to train (20 epochs, including 
pretraining).

\begin{table*}[h]
    \centering
    \begin{tabular}{l||c|c|c||c|c|c}
         & \multicolumn{3}{c}{Non-Anaphoric (FA)} & \multicolumn{3}{c}{Anaphoric (FN + WL)} \\
         & Proper & Nominal & Pronom. & Proper & Nominal & Pronom. \\
         \hline
        baseline & 630 & 714 & 1051 & 374 + 190 & 821 + 238 & 347 + 779 \\
        $L_{ec}$ & 529 & 609 & 904 & 438 + 182 & 924 + 220 & 476 + 740 \\
        \hdashline
        $L_{\beta=1,B^3}$ & 545 & 559 & 883 & 433 + 172 & 951 + 192 & 457 + 761 \\
        $L_{\beta=\sqrt{1.4},B^3}$ & 557 & 564 & 926 & 426 + 178 & 941 + 194 & 431 + 766 \\
        \hdashline
        $L_{\beta=1,LEA}$ & 513 & 547 & 843 & 456 + 170 & 960 + 191 & 513 + 740 \\
        $L_{\beta=\sqrt{1.8},LEA}$ & 577 & 591 & 1001 & 416 + 176 & 919 + 198 & 358 + 790 \\
    \end{tabular}
    \caption{Number of: 
    ``false anaphor'' (FA, a non-anaphoric mention marked as anaphoric), 
    ``false new'' (FN, an anaphoric mention marked as non-anaphoric), 
    and ``wrong link'' (WL, an anaphoric mention is linked to a wrong antecedent) errors on the development set.}   
    \label{table errors}
\end{table*}

\begin{figure}[t!]
\begin{mdframed}
    (a) [...] that {\color{blue} $_{13}$[the virus]} could mutate [...] /. In fact some health experts say $_{17}$[it]$^{13*}_{17,17}$ 's just a matter of time [...]
    
    (b) Walk a mile in {\color{violet} $_{157}$[our]} shoes that 's all I have to say because anybody who works in a nursing home will very quickly learn that these are very fragile patients /. {\color{violet} $_{165}$[We]$^{157}_{165*,157}$} did the very best {\color{violet} $_{167}$[we]$^{165}_{165,165}$} could in these situations [...]
    
\end{mdframed}
    
    \caption{Example predictions: the subscript before a mention
    is its index. The superscript / subscript after a mention indicates the 
    antecedent predicted by the baseline / $L_{\beta=1,B^3},L_{\beta=\sqrt{1.4},B^3}$. 
    Mentions with the same color are true coreferents. 
    ``*''s mark \textit{incorrect} decisions.}
    \label{fig examples}    
\end{figure}

\begin{figure*}[ht]
    \centering
    \includegraphics[width=0.47\textwidth]{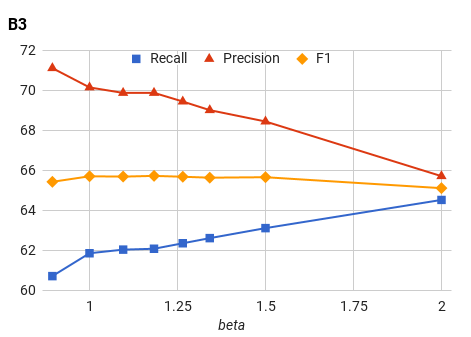}
    \includegraphics[width=0.47\textwidth]{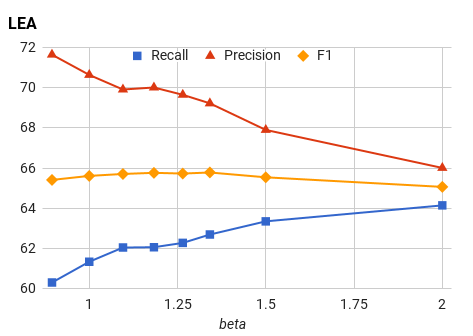}
    \caption{Recall, precision, F$_1$ (average of MUC, B$^3$, CEAF$_e$), 
    on the development set when training with $L_{\beta,B^3}$ (left) and $L_{\beta,LEA}$ (right). 
    Higher values of $\beta$ yield lower precisions but higher recalls.}
    \label{fig beta}
    \vspace{-1ex}
\end{figure*}

\subsection{Analysis}

Table~\ref{table errors} shows the breakdown of errors made by the baseline and 
our resolvers on the development set. 
The proposed resolvers make fewer 
``false anaphor'' and ``wrong link'' errors but more 
``false new'' errors compared to the baseline. This suggests that loss optimization prevents over-clustering, driving the precision up:
when antecedents are difficult to detect,  the self-link (i.e., $a_i = i$) is chosen.
When $\beta$ increases, they make more ``false anaphor'' and 
``wrong link'' errors but less ``false new'' errors. 

In Figure~\ref{fig examples}(a) the baseline, but not $L_{\beta=1,B^3}$ nor 
$L_{\beta=\sqrt{1.4},B^3}$, mistakenly links $_{17}$[it] with {\color{blue} $_{13}$[the virus]}. 
Under-clustering, on the other hand, is a problem for our resolvers with $\beta=1$:
in example (b), $L_{\beta=1,B^3}$ missed {\color{violet} $_{165}$[We]}.
This behaviour results in a reduced recall but the recall is not damaged severely, as we still obtain a better $F_1$ score.
We conjecture that this behaviour is a consequence 
of using the $F_1$ score in the objective, and, if undesirable, F$_\beta$ with $\beta > 1$ 
can be used instead. For instance, also in Figure~\ref{fig examples}, $L_{\beta=\sqrt{1.4}, B^3}$
correctly detects $_{17}$[it] as non-anaphoric and links {\color{violet} $_{165}$[We]} with 
{\color{violet} $_{157}$[our]}. 

Figure~\ref{fig beta} shows recall, precision, F$_1$ (average of MUC, B$^3$, CEAF$_e$), 
on the development set when training with $L_{\beta,B^3}$ and $L_{\beta,LEA}$. 
As expected, higher values of $\beta$ yield lower precisions but higher recalls. 
In contrast, F$_1$ increases until reaching the highest point when $\beta=\sqrt{1.4} \approx 1.18$ for 
$L_{\beta,B^3}$ ($\beta=\sqrt{1.8} \approx 1.34$ for $L_{\beta,LEA}$), it then decreases gradually.

\subsection{Discussion}

Because the resolvers are evaluated on F$_1$ score metrics, 
it should be that $L_{\beta,B^3}$ and $L_{\beta,LEA}$ perform 
the best with $\beta=1$. Figure~\ref{fig beta} and Table~\ref{table test} however 
do not confirm that: $\beta$ should be set with values a little bit larger than 1. 
There are two hypotheses. First, the statistical difference between the training set 
and the development set leads to the case that the optimal $\beta$ on one set can 
be sub-optimal on the other set. 
Second, in our experiments we fix $T = 1$, 
meaning that the relaxations might not be close to the true evaluation metrics enough. 
Our future work, to confirm/reject this, is to use annealing, i.e., gradually decreasing $T$ down to 
(but larger than) 0. 

Table~\ref{table test} shows that the difference between 
$L_{\beta,B^3}$ and $L_{\beta,LEA}$ in terms of accuracy 
is not substantial (although the latter is slightly better than the former). However,
one should expect that $L_{\beta,B^3}$ would outperform $L_{\beta,LEA}$ on B$^3$ metric
while it would be the other way around on LEA metric. 
It turns out that, B$^3$ and LEA behave quite similarly in non-extreme cases. 
We can see that in Figure 2, 4, 5, 6, 7 in \newcite{moosavi-strube:2016:P16-1}.

\section{Related work} 

Mention ranking and entity centricity are two main streams in 
the coreference resolution literature. 
Mention ranking \cite{denis2007ranking, D13-1203, TACL604, wiseman2015learning} 
considers local and independent decisions when choosing a correct antecedent 
for a mention. This approach is computationally efficient and currently 
dominant with state-of-the-art performance \cite{N16-1114, clark-manning:2016:EMNLP2016}. 
\newcite{P15-1137} propose to use simple neural networks to 
compute mention ranking scores and to use a heuristic loss to train the model.
\newcite{N16-1114} extend this by employing LSTMs to compute mention-chain 
representations which are then used to compute ranking scores. They call these 
representations \textit{global features}. 
\newcite{clark-manning:2016:EMNLP2016} build a similar resolver as in \newcite{P15-1137}
but much stronger 
thanks to deeper neural networks and ``better mention detection, 
more effective, hyperparameters, and more epochs of training''. 
Furthermore, using reward rescaling
they achieve the best performance in the literature on the English and Chinese portions 
of the CoNLL 2012 dataset. Our work is built upon mention ranking by turning a mention-ranking model 
into an entity-centric one. It is worth noting that although we use the model 
proposed by \newcite{P15-1137}, any mention-ranking models can be employed. 

Entity centricity 
\cite{wellner2003towards,poon2008joint,haghighi2010coreference, ma-EtAl:2014:EMNLP2014, clark-manning:2016:P16-1}, 
on the other hand, incorporates entity-level information to solve the problem. 
The approach can be top-down as in \newcite{haghighi2010coreference} where they  
propose a generative model. It can also be bottom-up by merging smaller clusters into 
bigger ones as in \newcite{clark-manning:2016:P16-1}. The method proposed by \newcite{ma-EtAl:2014:EMNLP2014}
greedily and incrementally adds mentions to previously built clusters using 
a prune-and-score technique. 
Importantly, employing imitation learning these two methods can optimize the resolvers
directly on evaluation metrics. 
Our work is similar to \newcite{ma-EtAl:2014:EMNLP2014} in the sense that 
our resolvers incrementally add mentions to previously built clusters. 
However, different from both 
\newcite{ma-EtAl:2014:EMNLP2014,clark-manning:2016:P16-1}, our resolvers do not use
any discrete decisions (e.g., merge operations). Instead, they seamlessly compute 
the probability that a mention refers to an entity from mention-ranking probabilities, 
and are optimized on differentiable relaxations of evaluation metrics.

Using differentiable relaxations of evaluation metrics as in our work is related to 
a line of research in reinforcement learning where a non-differentiable action-value function 
is replaced by a differentiable critic \cite{sutton1999policy, DBLP:conf/icml/SilverLHDWR14}. 
The critic is trained so that it is as close to the true action-value function as possible. 
This technique is applied to machine translation \cite{gu2017trainable} where 
evaluation metrics (e.g., BLUE) are non-differentiable. A disadvantage of using critics is that 
there is no guarantee that the critic converges to the true evaluation metric given 
finite training data. In contrast, our differentiable relaxations do not need 
to train, and the convergence is guaranteed as $T \rightarrow 0$. 

\section{Conclusions}
We have proposed 
\begin{itemize}
    \item a method for turning any mention-ranking resolver into an entity-centric one 
    by using a recursive formula to combine scores of individual local decisions, and
    \item differentiable relaxations for two coreference evaluation metrics, B$^3$ and LEA. 
\end{itemize}
Experimental results show that our approach outperforms the resolver 
by \newcite{N16-1114}, and gains a higher improvement over the baseline than that of
\newcite{clark-manning:2016:EMNLP2016} but with much shorter training time.

\section*{Acknowledgments}
We would like to thank Raquel Fernández, Wilker Aziz, Nafise Sadat Moosavi, 
and anonymous reviewers for their suggestions and
comments. The project was supported by the
European Research Council (ERC StG BroadSem
678254), the Dutch National Science Foundation
(NWO VIDI 639.022.518) and an Amazon Web Services
(AWS) grant.

\bibliography{ref}

\bibliographystyle{acl_natbib}

\end{document}